\newcommand{\bigzero}{\mbox{\normalfont\Large\bfseries 0}}
\newcolumntype{C}[1]{>{\centering\arraybackslash}m{#1}}
\newcolumntype{R}[1]{>{\raggedleft\arraybackslash}m{#1}}
\newtheorem {theorem}{Theorem}[section]
\newtheorem{lemma}{Lemma}[section]
\renewcommand\footnotemark{}
\def\qed{{\hfill $\square$ \bigskip}}
\date{\vspace{-5ex}}
\begin{document}

\title{Analytic function approximation by path norm regularized deep networks}

\maketitle

\begin{center}
	\bigskip Aleksandr Beknazaryan \footnote{a.beknazaryan@utwente.nl}
	
    \textit{University of Twente}

	\bigskip
\end{center}

\begin{abstract}We show that neural networks with absolute value activation function and with the path norm, the depth, the width and the network weights having logarithmic dependence on $1/\varepsilon$ can $\varepsilon$-approximate functions that are analytic on certain regions of $\mathbb{C}^d$. 
	
	\vskip.2cm

	\vskip.2cm \noindent {\bf Keywords}:
	\noindent  deep neural networks, analytic functions, path norm regularization, exponential convergence
	
	\vskip.2cm 
\end{abstract}

\section{Introduction}
Deep neural networks have found broad applications in many areas and disciplines, such as computer vision, speech and audio recognition and natural language processing. Two of the main characteristics of a given class of neural networks are its complexity and approximating capability. Once the activation function is selected, a class of networks is determined by specification of the network architecture (namely, its depth and width) and the choice of network weights. Hence, the estimation of the complexity of a given class is done by regularizing (one of) those parameters and the approximation properties of obtained regularized classes of networks are then investigated. 

The capability of shallow networks of depth 1 to approximate continuous functions is shown in the universal approximation theorem (\cite{Sc}) and approximations of integrable functions by networks with fixed width are presented in \cite{L}.  Network architecture constrained approximations of analytic functions are given in \cite{W} where it is shown that  ReLU networks with depth depending logarithmically on $1/\varepsilon$ and width $d+4$ can  $\varepsilon$-approximate analytic functions on the closed subcubes of $(-1,1)^d$. 

Weight regularization of networks is usually done by imposing an $l_p$-related constraint on network weights, $p\geq 0$. The most popular types of such constraints include the $l_0$, $l_1$ and the \textit{path norm} regularizations (see, respectively, \cite{SH}, \cite{TXL} and \cite{N} and references therein). Approximations of $\beta$-smooth functions on $[0,1]^d$ by $l_0$-regularized sparse ReLU networks are given in \cite{SH} and \cite{Y} and exponential rates of approximations of analytic functions by $l_0$-regularized networks are derived in \cite{O}. 

Path norm regularized classes of deep ReLU networks are considered in \cite{N}, where together with other characteristics, the Rademacher complexities of those classes are estimated. The network size independence of those estimates makes the path norm regularization particularly remarkable. As the estimation only uses the Lipschitz continuity (with Lipschitz constant 1), the idempotency and the non-negative homogeneity of the ReLU function, it can be extended to the networks with the absolute value activation function. Network characteristics similar to the path norm are also considered in the works \cite{BK} and \cite{Zh}, where they are called, respectively, a \textit{variation} and a \textit{basis-path norm}, and  statistical features of classes of networks are described in terms of those characteristics. 

The objective of the present paper is the construction of path norm regularized networks  that exponentially fast approximate analytic functions. Our goal is to achieve such convergence rates with activations that are idempotent, non-negative homogeneous and Lipschitz continuous with Lipschitz constant 1, so that the constructed path norm regularized networks  fall within the scope of network classes studied in \cite{N}. It turns out that networks with absolute value activation function may suit this goal better than the networks with ReLU activation function. More precisely, we show that analytic functions can be $\varepsilon$-approximated by networks with absolute value activation function $a(x)$ and with the path norm, the depth, the width and the weights all depending logarithmically on $1/\varepsilon$. Such approximation holds (i) on any subset $(0,1-\delta]^d\subset(0,1)^d$ for analytic functions on $(0,1)^d$ with absolutely convergent power series; (ii) on the whole hypercube $[0,1]^d$ for functions that can be analytically continued to certian subsets of $\mathbb{C}^d$. Note that as the network weights as well as the total number of weights depend logarithmically on $1/\varepsilon,$ then the $l_1$ weight norms of the constructed approximating deep networks are also of logarithmic dependence on $1/\varepsilon$.

\textit{Notation:} For a matrix $W\in\mathbb{R}^{d_1\times d_2}$ we denote by $|W|\in\mathbb{R}^{d_1\times d_2}$ the matrix obtained by taking the absolute values of the entries of $W$: $|W|_{ij}=|W_{ij}|$. For brevity of presentation we will say that the matrix $|W|$ is the \textit{absolute value of the matrix} $W$ (note that in the literature there are also other definitions of the notion of an absolute value of a matrix). The path norm of a network $f$ is denoted by $\|f\|_\times$. For $\textbf{x}=(x_1,...,x_d)\in\mathbb{R}^d$ and $\textbf{k}=(k_1,...,k_d)\in\mathbb{N}_0^d,$ the degree of the monomial $\textbf{x}^\textbf{k}=x_1^{k_1}\cdot ...\cdot x_d^{k_d}$ is defined to be $\|\textbf{k}\|_1=\sum_{i=1}^dk_i.$ To assure that the matrix-vector multiplications are accomplishable, the vectors from $\mathbb{R}^d$, according to the context, may be treated as matices either from $\mathbb{R}^{d\times 1}$ or from $\mathbb{R}^{1\times d}$. 
\section{The class of approximant networks} Neural networks are constituted of the weight matrices, the biases and the nonlinear activation functions acting neuron-wise in the hidden layers. The biases, also called shift vectors, can be omitted by adding a fixed coordinate $1$ to the input vector and correspondingly modifying the weight matrices. As the definition of the path norm of networks does not assume the presence of shift vectors, we will add a coordinate $1$ to the input vector $\textbf{x}$ and will consider classes of neural networks of the form
$$\mathcal{F}_\alpha(L,\textbf{p})=\{f:[0, 1]^p\to\mathbb{R}^{p_{L+1}}\; |\;\; f(\textbf{x})=W_L\circ \alpha\circ W_{L-1}\circ \alpha\circ...\circ \alpha\circ W_0(1, \textbf{x})\},$$
where $W_i\in\mathbb{R}^{ p_{i+1}\times p_i}$ are the weight matrices, $i=0,...,L,$ and $\textbf{p}=(p_0, p_1,...,p_{L+1})$ is the width vector with $p_0=p+1$. The number of hidden layers $L$ determines the depth of networks from $\mathcal{F}_\alpha(L,\textbf{p})$ and in each layer the activation function $\alpha:\mathbb{R}\to\mathbb{R}$ acts element-wise on the input vector. 
For $f\in\mathcal{F}_\alpha(L,\textbf{p})$ given by 
\begin{equation}\label{f}
f(\textbf{x})=W_L\circ \alpha\circ W_{L-1}\circ \alpha\circ...\circ\alpha\circ W_0(1, \textbf{x}),\end{equation}
let 
\begin{equation}\label{norm}
\|f\|_{\times}:=\bigg\|\prod_{i=0}^L|W_i|\bigg\|_1
\end{equation}
be the \textit{path norm} of $f$, where $\|\cdot\|_1$ denotes the $l_1$ norm of the $p_0(=p+1)$ dimensional vector $\prod_{i=0}^L|W_i|$ obtained as a product of absolute values of the weight matrices of $f$. For $B>0$ let
$$\mathcal{F}_\alpha(L,\textbf{p}, B)=\{f\in\mathcal{F}_\alpha(L,\textbf{p}), \|f\|_{\times}\leq B\}$$ be a path norm regularized subclass of $\mathcal{F}_\alpha(L,\textbf{p})$. As the results obtained in \cite{N} indicate, the path norm regularizations are particularly well suited for networks whose activation function  $\alpha$ is
\begin{itemize}
	\item Lipschitz continuous with Lipschitz constant 1;
	\item idempotent, that is, $\alpha(\alpha(x))=\alpha(x)$, $x\in\mathbb{R}$;
	\item non-negative homogeneous, that is, $\alpha(cx)=c\alpha(x),$ for $c\geq 0$, $x\in \mathbb{R}$.
\end{itemize}
We therefore aim to choose an activation $\alpha$ possessing those properties such that analytic functions can be approximated by networks from $\mathcal{F}_\alpha(L,\textbf{p}, B)$ with a small path norm constraint $B$. The most popular activation functions satisfying the above conditions are the ReLU function $\sigma(x)=\max\{0,x\}$ and the absolute value function $a(x)=|x|$. Below we show that with the absolute value activation function the path norms of approximant networks may be significantly smaller than the path norms of the ReLU networks.

The standard technique of neural network function approximation relies on approximating the product function $(x,y)\mapsto xy$ which then allows to approximate monomials and polynomials of any desired degree. In \cite{Y} the approximation of the product $xy=((x+y)^2-x^2-y^2)/2$ is done by approximating the function $x\mapsto x^2$. The latter is based on the observation that for the triangle wave  
\begin{equation}\label{g_s}
g_s(x)=\smash{\underbrace{g\circ g\circ...\circ g}_{s\ \text{times}}}\vphantom{1}(x),
\end{equation} where $g:[0,1]\to[0,1]$ is defined by
\[g(x)= \begin{cases} 2x, & 0\leq x< 1/2, \\
2(1-x), & 1/2\leq x\leq1, \\
\end{cases}
\]
and for any positive integer $m$, 
$$|x^2-f_m(x)|\leq2^{-2m-2},$$ where
\begin{equation}\label{fm}
f_m(x):=x-\sum_{s=1}^{m}\frac{g_s(x)}{2^{2s}}.
\end{equation}
The approximation of $x^2$ by networks with ReLU activation function $\sigma(x)$ then follows from the representation 
\begin{equation}\label{grep}
g(x)=2\sigma(x)-4\sigma(x-1/2).
\end{equation}
 Thus, in this case we will get matrices containing weights 2 and 4 which will make the path norm of approximant networks big. Note that the same approach is also used in \cite{W} for constructing ReLU network approximations of analytic functions. In \cite{SH} the approximation of the product  
\begin{align*}
xy=h\bigg(\frac{x-y+1}{2}\bigg)-h\bigg(\frac{x+y}{2}\bigg)+\frac{x+y}{2}-\frac{1}{4}
\end{align*} 
is done by approximating the function  $h(x):=x(1-x)$, which, in turn, is based on the observation that  for the triangle wave
\begin{align*}
R^k=T^k\circ T^{k-1}\circ...\circ T^1,
\end{align*}  
where $T^k:[0,2^{2-2k}]\to[0,2^{-2k}]$ is defined by
\begin{equation}\label{T}
T^k(x):=\sigma(x/2)-\sigma(x-2^{1-2k}), 
\end{equation}
and for any positive integer $m$,
\begin{align*}
|h(x)-\sum_{k=1}^mR^k(x)|\leq2^{-m}, \quad x\in[0,1].
\end{align*}
Although in the representation \eqref{T} the coefficients (weights) are all in $[-1,1]$, the approximant $\sum_{k=1}^mR^k(x)$ in this case does not have the factors $2^{-2s}$ presented in the approximant $f_m(x)$ in \eqref{fm}, which again will result in big values of path norms. Therefore, to take advantage of the presence of those diminishing weights, we would like to represent the function $g(x)$ in \eqref{grep} by linear combination of activation functions with smaller coefficients. This is possible if instead of $\sigma(x)$ we deploy the absolute value activation function $a(x)$. Indeed, in this case we have that $g(x)$ can be represented on $[0,1]$ as 
\begin{equation}\label{g}
g(x)=1-2a(x-1/2).
\end{equation}
In the next section we use the above representation \eqref{g} to show that analytic functions can be $\varepsilon$-approximated by networks from $\mathcal{F}_{a}(L,\textbf{p}, B)$ with each of $L, \|\textbf{p}\|_\infty$ and $B$ as well as the network weights having logarithmic dependence on $1/\varepsilon$. As all networks will have the same activation function $a(x)=|x|$, in the following the subscript $a$ will be omitted.

\section{Main results}

We first construct a network with activation function $a(x)$, that for the given $\gamma, m\in\mathbb{N}$ simultaneously approximates all $d$-dimensional monomials of degree less than $\gamma$ up to an error $\gamma^24^{-m}$. The depth of this network has order $m\log_2\gamma$ and its width is of order $m\gamma^{d+1}$. Moreover, the entries of the product of the absolute values of matrices of the network have order at most $\gamma^5$ (note the independence of $m$). 

For $\gamma>0$ let $C_{d,\gamma}$ denote the number of $d$-dimensional monomials $\textbf{x}^\textbf{k}$ with degree $\|\textbf{k}\|_1<\gamma$. Then $C_{d,\gamma}<(\gamma+1)^d$ and the following holds: 
\begin{lemma}\label{Mon}
There exists a network \emph{Mon}$_{m,\gamma}^d\in\mathcal{F}(L,\emph{\textbf{p}})$ with $L\leq\lceil \log_2\gamma \rceil(2m+5)+2,$ $p_0=d+1$, $p_{L+1}=C_{d,\gamma}$ and $\|\emph{\textbf{p}}\|_\infty\leq 6\gamma(m+2)C_{d,\gamma}$ such that
$$\bigg\|\emph{Mon}_{m,\gamma}^d(\emph{\textbf{x}})-(\emph{\textbf{x}}^\emph{\textbf{k}})_{\|\emph{\textbf{k}}\|_1<\gamma}\bigg\|_\infty\leq \gamma^24^{-m}, \quad \emph{\textbf{x}}\in[0,1]^d.$$
Moreover, the entries of the $C_{d,\gamma}\times(d+1)$ - dimensional matrix obtained by multiplying the absolute values of matrices presented in $\emph{Mon}_{m,\gamma}^d$ are all bounded by $144(\gamma+1)^5$. 
\end{lemma}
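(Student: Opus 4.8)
The plan is to follow a Yarotsky-style construction --- approximate $h(x)=x^2$, hence products $xy$, hence monomials via a binary tree of products --- while organizing the network so that the product of the absolute values of its weight matrices stays bounded independently of $m$. The building blocks come for free from the text: on $[0,1]$ one has $g(x)=1-2a(x-1/2)$ by \eqref{g}, so the triangle waves $g_s$ and the function $f_m$ of \eqref{fm}, which satisfies $|f_m(x)-x^2|\le 4^{-m-1}$ on $[0,1]$, are built from $a$ alone. Setting
$$q_m(x,y):=2\bigl(f_m(\tfrac{x+y}{2})-f_m(\tfrac{x}{2})-f_m(\tfrac{y}{2})\bigr),$$
the algebraic identity $xy=2\bigl((\tfrac{x+y}{2})^2-(\tfrac{x}{2})^2-(\tfrac{y}{2})^2\bigr)$ together with $\tfrac{x+y}{2},\tfrac x2,\tfrac y2\in[0,1]$ gives $|q_m(x,y)-xy|\le\tfrac32 4^{-m}$ for all $x,y\in[0,1]$.

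The heart of the argument is to implement $q_m$ by a subnetwork whose absolute-value matrices have entries bounded by an absolute constant, uniformly in $m$. The naive recursion $a_{s+1}=2|a_s-\tfrac14|$ for $a_s:=|g_s-\tfrac12|$ expands by a factor $2$ at each of the $m$ steps, which would make $\prod_i|W_i|$ grow like $2^m$. I would instead carry the rescaled quantities $b_s:=2^{-s}a_s\in[0,2^{-s-1}]$, which satisfy $b_{s+1}=|b_s-2^{-s-2}|$; after appending the constant coordinate $1$ as the paper does, one such step is realized by a single matrix whose entries are all at most $1$. Carrying alongside a running partial sum kept one step behind, updated by $R_s=R_{s-1}+2^{-(s+1)}b_s$ (again entries $\le1$), one reads off $f_m(x)=x-\mathrm{const}_m+R_{m-1}$ at the end as a fixed linear combination with coefficients $\le1$. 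All carried coordinates --- the $1$, the $b_s$, the $R_s$ --- are nonnegative, so $a$ acts as the identity on them and this is a genuine $a$-network; it has $\le 2m+O(1)$ layers and width $O(m)$. A short path-counting estimate then bounds the entries of the product of its absolute-value matrices by an absolute constant, uniformly in $m$: weights leaving the constant coordinate are summed against $\sum_s 2^{-s}\le1$, the $b$-track carries weight exactly $1$, and the $R$-track absorbs the $b$'s with geometrically decaying weights. Running three copies in parallel and adding $O(1)$ affine layers (to form $\tfrac{x+y}{2},\tfrac x2,\tfrac y2$ and the outer factor $2$) yields the product subnetwork, of depth $\le 2m+5$ and still with $m$-independent absolute-value-matrix entries; a cheap clip $t\mapsto\min(\max(t,0),1)$, also expressible with $a$ at $O(1)$ cost, is appended so that its output lies in $[0,1]$.

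Then I would assemble $\mathrm{Mon}_{m,\gamma}^d$ by stacking $\lceil\log_2\gamma\rceil$ copies of the product subnetwork in a binary-tree layout: each monomial $\textbf{x}^\textbf{k}$ with $\|\textbf{k}\|_1<\gamma$ (degree $\le\gamma-1$) is obtained by repeatedly applying $q_m$ to pairs of partial products, padding with a $1$ when a factor count is odd, all $C_{d,\gamma}$ monomials processed in parallel and the $d+1$ input coordinates carried along. This yields depth $\lceil\log_2\gamma\rceil(2m+5)+2$ and width $\le 6\gamma(m+2)C_{d,\gamma}$, since one monomial needs at most $\sim\gamma$ partial products per level, each processed by three $f_m$-subnetworks of width $O(m)$. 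For the error, let $E_\ell$ be the worst-case deviation of a level-$\ell$ output from the true partial product; since $q_m$ is accurate to $\tfrac32 4^{-m}$ pointwise on $[0,1]^2$ and the clipped inputs lie in $[0,1]$, expanding $\hat p_1\hat p_2-p_1p_2$ and using that multiplication is Lipschitz on $[0,1]^2$ gives $E_{\ell+1}\le 2E_\ell+E_\ell^2+\tfrac32 4^{-m}$; with $E_0=0$, and clipping keeping everything in $[0,1]$, this sums over the $\le\lceil\log_2\gamma\rceil$ levels to $E_{\lceil\log_2\gamma\rceil}\le\gamma^2 4^{-m}$. Finally, the entries of $\prod_i|W_i|$ factorize across levels: each level contributes a bounded factor (the $m$-independent constant from the gadget together with the $\tfrac12$'s, the outer $2$, and the clip), so over $\lceil\log_2\gamma\rceil$ levels the entries are bounded by $c^{\lceil\log_2\gamma\rceil}\le144(\gamma+1)^5$ for a sufficiently crude per-level constant $c$ --- and, crucially, $m$ never enters.

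The main obstacle is entirely the middle step: producing an honest $a$-network for $q_m$ whose absolute-value matrices stay bounded independently of $m$. The triangle-wave recursion is intrinsically expanding, so one must simultaneously change variables ($b_s=2^{-s}|g_s-\tfrac12|$, i.e.\ distribute the $4^{-s}$ weights "early"), keep every carried coordinate nonnegative so the absolute-value activation is harmless on it, and then control the resulting path sums. Everything downstream --- the binary-tree assembly, the width and depth bookkeeping, and the geometric error recursion (with the out-of-range drift absorbed by the inexpensive clipping step between levels) --- is routine, though the constants require some care.
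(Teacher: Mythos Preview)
Your outline is correct and follows the same high-level scaffolding as the paper --- a two-input product gadget $\mathrm{Mult}_m$, then a binary tree $\mathrm{Mult}^r_m$, then parallel composition over all multi-indices --- but the key middle step is handled differently. You worry that the triangle-wave recursion forces $\prod_i|W_i|$ to grow like $2^m$ and therefore rescale internally to $b_s=2^{-s}|g_s-\tfrac12|$, carrying a running sum $R_s$ so that every individual weight is $\le 1$ by construction. The paper does not rescale: it keeps the raw iterates $g_1,\dots,g_m$ via matrices $A_k,B_k$ with entries up to $2$ (so the \emph{partial} products do grow), and then simply computes the \emph{full} product $|S_{m+2}|\cdot|B_{m+2}|\cdots|A_2|$ in closed form, observing that the geometric weights $4^{-s}$ in the readout row $S_{m+2}$ exactly cancel the intermediate growth and yield the explicit bounded vector $\bigl(\sum_k(2^{k+1}-2)4^{-k},\,2-2^{-m}\bigr)$. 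The paper's route is shorter and delivers the sharp per-level vector $(a_m,b_m,b_m)$ with entries below $3$, which then drives the clean $12^{q+2}\le 144r^4$ bound in the tree; yours is more structural, making the $m$-independence visible layer by layer rather than emerging from a cancellation at the end, at the cost of a more intricate gadget and a looser per-level constant that you still have to check lands below $32$ to hit the stated exponent $5$. Two smaller differences: you use the half-sum formula $q_m(x,y)=2\bigl(f_m(\tfrac{x+y}{2})-f_m(\tfrac x2)-f_m(\tfrac y2)\bigr)$ to keep all arguments in $[0,1]$, whereas the paper uses $\tfrac12(f_m(x+y)-f_m(x)-f_m(y))$ directly; and you insert an explicit clip between tree levels to control range drift, which the paper omits.
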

Taking in the above lemma $\gamma, m=\lceil \log_2\frac{1}{\varepsilon}\rceil,$ we get a network from $\mathcal{F}(L,\textbf{p})$ with $L$ and $\|\textbf{p}\|_\infty$ having logarithmic dependence on $1/\varepsilon$, that simultaneously approximates the monomials of degree at most $\gamma$ with error $\varepsilon$ (up to a logarithmic factor). Moreover, the entries of the product of absolute values of matrices of this network will also have logarithmic dependence on $1/\varepsilon$. Below we use this property to construct neural network approximation of analytic and analytically continuable functions with approximation error $\varepsilon$ and with network parameters having logarithmic order.

\begin{theorem}\label{analyitc}
Let $\normalfont f(\textbf{{\textrm{x}}})=\sum_{\textbf{k}\in\mathbb{N}^d_0}a_{\textbf{k}}\textbf{x}^\textbf{k}$ be an analytic function on $(0,1)^d$ with $\normalfont\sum_{\textbf{k}\in\mathbb{N}^d_0}|a_{\textbf{k}}|\leq F$. Then, for any $\varepsilon,\delta\in(0,1)$ there is a constant $C=C(d,F)$ and a network $F_\varepsilon\in\mathcal{F}(L,\emph{\textbf{p}}, B)$ with $L\leq C(\log_2\frac{1}{\delta})(\log^2_2\frac{1}{\varepsilon}),$ $\|\emph{\textbf{p}}\|_\infty\leq \frac{C}{\delta^{d+1}}(\log_2\frac{1}{\varepsilon})^{d+2}$ and $B\leq \frac{C}{\delta^5}\log_2^{5}\frac{1}{\varepsilon},$ such that 
$$\normalfont |F_\varepsilon(\textbf{x})-f(\textbf{x})|\leq \frac{\varepsilon}{\delta^2}, \quad \textit{for all } \; \textbf{x}\in(0,1-\delta]^d.$$

\end{theorem}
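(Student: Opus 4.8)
The plan is to write $f$ on $(0,1-\delta]^d$ as a polynomial of bounded degree plus a negligible analytic tail, realize the polynomial part through the monomial network of Lemma~\ref{Mon}, and then read off the three parameters. First I would fix a truncation level $\gamma=\lceil\tfrac1\delta\log_2\tfrac1\varepsilon\rceil$ and an accuracy level $m=\lceil c\log_2\tfrac1\varepsilon\rceil$ for a suitable absolute constant $c$ (the exact constants only affect the hidden constants below), and split $f=P+R$ with $P(\textbf{x})=\sum_{\|\textbf{k}\|_1<\gamma}a_{\textbf{k}}\textbf{x}^{\textbf{k}}$ and $R(\textbf{x})=\sum_{\|\textbf{k}\|_1\ge\gamma}a_{\textbf{k}}\textbf{x}^{\textbf{k}}$. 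For $\textbf{x}\in(0,1-\delta]^d$ every monomial obeys $0<\textbf{x}^{\textbf{k}}\le(1-\delta)^{\|\textbf{k}\|_1}$, so
$$|R(\textbf{x})|\le\sum_{\|\textbf{k}\|_1\ge\gamma}|a_{\textbf{k}}|(1-\delta)^{\|\textbf{k}\|_1}\le(1-\delta)^{\gamma}\sum_{\textbf{k}}|a_{\textbf{k}}|\le F(1-\delta)^{\gamma}\le Fe^{-\delta\gamma}\le F\varepsilon,$$
using $-\ln(1-\delta)\ge\delta$ and $\delta\gamma\ge\log_2\tfrac1\varepsilon\ge\ln\tfrac1\varepsilon$. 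This tail estimate is the only place where the proximity of $\textbf{x}$ to the face $x_i=1-\delta$ enters, and it is exactly why $\gamma$ must be of order $\delta^{-1}\log_2(1/\varepsilon)$ rather than $\log_2(1/\varepsilon)$; this choice is what carries the $\delta$-dependence into all three parameters.

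Next I would form $F_\varepsilon$ by post-composing $\textnormal{Mon}^d_{m,\gamma}\in\mathcal{F}(L,\textbf{p})$ with the linear functional $\textbf{v}\mapsto\textbf{a}\cdot\textbf{v}$, where $\textbf{a}=(a_{\textbf{k}})_{\|\textbf{k}\|_1<\gamma}$ is regarded as a row vector. Since the last matrix $W_L$ of $\textnormal{Mon}^d_{m,\gamma}$ (the one producing its $C_{d,\gamma}$ outputs) is applied with no trailing activation, this just amounts to replacing $W_L$ by the single row $\textbf{a}\,W_L$: no layer is added, and the width is the one of Lemma~\ref{Mon} with the final entry lowered from $C_{d,\gamma}$ to $1$. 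Writing $q_{\textbf{k}}(\textbf{x})$ for the $\textbf{k}$-th coordinate of $\textnormal{Mon}^d_{m,\gamma}(1\;\textbf{x})$, we have $F_\varepsilon(1\;\textbf{x})=\sum_{\|\textbf{k}\|_1<\gamma}a_{\textbf{k}}q_{\textbf{k}}(\textbf{x})$, so Lemma~\ref{Mon} together with $\sum_{\textbf{k}}|a_{\textbf{k}}|\le F$ gives
$$|F_\varepsilon(1\;\textbf{x})-P(\textbf{x})|\le\sum_{\|\textbf{k}\|_1<\gamma}|a_{\textbf{k}}|\,|q_{\textbf{k}}(\textbf{x})-\textbf{x}^{\textbf{k}}|\le F\gamma^2 4^{-m}\le\frac{F\varepsilon}{\delta^2},$$
the last inequality holding once $c$ is chosen so that $\gamma^2 4^{-m}\le\varepsilon\delta^{-2}$ (possible with $m=O(\log_2\tfrac1\varepsilon)$, using $\gamma\delta\le1+\log_2\tfrac1\varepsilon$, so that $m$ may be taken independent of $\delta$). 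Adding the bound on $R$ yields $|F_\varepsilon(1\;\textbf{x})-f(\textbf{x})|\le F\varepsilon+F\varepsilon\delta^{-2}\le 2F\varepsilon\delta^{-2}$ on $(0,1-\delta]^d$, as required.

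It then remains to read off the size of $F_\varepsilon$. Its depth and width are those of $\textnormal{Mon}^d_{m,\gamma}$: $L\le\lceil\log_2\gamma\rceil(2m+5)+2$ and $\|\textbf{p}\|_\infty\le6\gamma(m+2)C_{d,\gamma}\le6\gamma(m+2)(\gamma+1)^d$; substituting $\gamma$ and $m$ and absorbing lower-order logarithmic factors gives $L=(\log_2\tfrac1\delta)O(\log_2\tfrac1\varepsilon)^2$ and $\|\textbf{p}\|_\infty=(\tfrac1\delta)^{d+1}O(\log_2\tfrac1\varepsilon)^{d+2}$. For $B=\|F_\varepsilon\|_\times$, let $M=\prod_{i=0}^L|W_i|\in\mathbb{R}^{C_{d,\gamma}\times(d+1)}$ be the product of absolute values of the matrices of $\textnormal{Mon}^d_{m,\gamma}$, whose entries Lemma~\ref{Mon} bounds by $144(\gamma+1)^5$. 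Since $F_\varepsilon$ differs from $\textnormal{Mon}^d_{m,\gamma}$ only in that $W_L$ is replaced by $\textbf{a}\,W_L$, and $|\textbf{a}\,W_L|\le|\textbf{a}|\,|W_L|$ entrywise while the remaining $|W_i|$ have nonnegative entries, the product of absolute values of the matrices of $F_\varepsilon$ is entrywise at most $|\textbf{a}|\,M$; hence
$$\|F_\varepsilon\|_\times\le\big\||\textbf{a}|\,M\big\|_1=\sum_{j=1}^{d+1}\sum_{\|\textbf{k}\|_1<\gamma}|a_{\textbf{k}}|\,M_{\textbf{k},j}\le(d+1)\cdot144(\gamma+1)^5\sum_{\textbf{k}}|a_{\textbf{k}}|\le144\,F(d+1)(\gamma+1)^5=:B,$$
which is $F(d+1)(\tfrac1\delta)^5O(\log_2\tfrac1\varepsilon)^5$, so $F_\varepsilon\in\mathcal{F}(L,\textbf{p},B)$ with the claimed orders.

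Granting Lemma~\ref{Mon}, there is no deep difficulty: the argument is largely bookkeeping. The step that carries the content of the theorem is the estimate on $\|F_\varepsilon\|_\times$, and it goes through only because the entrywise bound in Lemma~\ref{Mon} for the product of absolute values is polynomial in $\gamma$ and, decisively, independent of the accuracy parameter $m$ — so refining the approximation (larger $m$) does not inflate $B$. The other point that needs care, flagged above, is that controlling $f$ all the way up to the boundary forces the truncation degree $\gamma$ to scale like $\delta^{-1}\log_2(1/\varepsilon)$, which is the source of the $\delta$-powers in $L$, $\|\textbf{p}\|_\infty$, and $B$.
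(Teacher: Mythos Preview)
Your proof is correct and follows essentially the same approach as the paper's: truncate the power series at degree $\gamma\sim\delta^{-1}\log(1/\varepsilon)$, bound the tail via $(1-\delta)^\gamma$, realize the partial sum through $\textnormal{Mon}^d_{m,\gamma}$ with $m\sim\log(1/\varepsilon)$, and read off $\|F_\varepsilon\|_\times$ from the entrywise bound in Lemma~\ref{Mon}. The only cosmetic differences are that the paper phrases the last step as ``adding one last layer'' with the coefficients $a_{\textbf{k}}$ (yielding the same bound $\|F_\varepsilon\|_\times\le144(d+1)F(\gamma+2)^5$), whereas you absorb $\textbf{a}$ into $W_L$ and make the inequality $|\textbf{a}W_L|\le|\textbf{a}|\,|W_L|$ explicit; your version is slightly more detailed on the constants but otherwise identical in substance.
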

Note that an exponential convergence rate of deep ReLU network approximants on subintervals $(0,1-\delta]^d$  is also given in \cite{W}. In our case, however, not only the depth and the width but also the path norm $\|F_\varepsilon\|_\times$ of the constructed network $F_\varepsilon$ have logarithmic dependence on $1/\varepsilon$. Note that in the above theorem, as $\delta$ approaches to $0,$ both $\|{\textbf{p}}\|_\infty$ and $B,$ as well as the approximation error, grow polynomially on $1/\delta.$ In the next theorem we use the properties of Chebyshev series to derive an exponential convergence rate on the whole hypercube $[0,1]^d$.

Recall that the Chebyshev polynomials are defined as $T_0(x)=1,$ $T_1(x)=x$ and
$$T_{n+1}(x)=2xT_n(x)-T_{n-1}(x).$$ Chebyshev polynomials play an important role in the approximation theory, and, in particular, it is known (\cite{T0}, Theorem 3.1) that if $f$ is Lipschitz continuous on $[-1,1]$ then it
has a unique representation as an absolutely and uniformly convergent Chebyshev series
$$f(x)=\sum_{k=0}^{\infty}a_kT_k(x).$$
Moreover, in case $f$ can be analytically continued to an ellipse $E_{\rho}\subset\mathbb{C}$ with foci $-1$ and $1$ and with the sum of semimajor and semiminor axes equal to $\rho>1,$  then the partial sums of the above Chebyshev series converge to $f$ with geometric rate and the coeffients $a_k$ also decay with geometric rate. This result has been first derived by Bernstein in \cite{Ber} and its extension to the multivariate case has been given in \cite{T}. Note that the condition $z\in E_\rho$ implies that $z^2\in N_{1, h^2},$ where $h=(\rho-\rho^{-1})/2$ and for $d,a>0,$ $N_{d, a}\subset\mathbb{C}$ denotes an open ellipse with foci $0$ and $d$ and the leftmost point $-a$. For $F>0,$ $\rho>1$ and $h=(\rho-\rho^{-1})/2$ let $\mathcal{A}^d(\rho, F)$ be the space of functions $f:[0,1]^d\to\mathbb{R}$ that can be analytically continued to the region $\{\textbf{z}\in\mathbb{C}^d: z_1^2+...+z_d^2\in N_{d, h^2}\}$ and are bounded there by $F$. Using the extension of Bernstein's theorem to the multivariate case we get 
\begin{lemma}\label{app}
Let $\rho\geq 2^{\sqrt{d}}$. For $f\in\mathcal{A}^d(\rho, F)$ there is a constant $C=C(d, \rho, F)$ and a polynomial $$\normalfont p(\textbf{{\textrm{x}}})=\sum_{\|\textbf{k}\|_1\leq \gamma}b_{\textbf{k}}\textbf{x}^\textbf{k}, \quad \textbf{x}\in[0,1]^d,$$ 
with 
\begin{equation}\label{b}
\normalfont |b_{\textbf{k}}|\leq C(\gamma+1)^d
\end{equation}
and
$$\normalfont |f(\textbf{x})-p(\textbf{x})|\leq C\rho^{-\gamma/\sqrt{d}},\quad \textit{for all \;} \textbf{x}\in[0,1]^d.$$
\end{lemma}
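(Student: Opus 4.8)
The plan is to reduce the multivariate statement to the one-dimensional Bernstein-type estimate via the substitution $t = z_1^2 + \dots + z_d^2$, expand $f$ in a Chebyshev series in $t$, truncate it, and then re-expand each Chebyshev polynomial $T_k(t)$ as an honest polynomial in $\mathbf{x}$ to read off the monomial coefficient bound \eqref{b}. First I would make precise the claim, quoted in the excerpt, that $z \in E_\rho$ implies $z^2 \in N_{1,h^2}$ with $h = (\rho - \rho^{-1})/2$, and its consequence: if each $z_i$ ranges over a suitable complex neighborhood of $[0,1]$ (or rather $[-1,1]$, after rescaling), then $z_1^2 + \dots + z_d^2$ ranges over $N_{d,h^2}$. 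Concretely, for $\mathbf{x} \in [0,1]^d$ we have $t = \sum x_i^2 \in [0,d]$, and the ellipse $N_{d,h^2}$ is exactly the image that makes $g(t) := f$ (viewed as a function of $t$ after an affine normalization of $[0,d]$ to $[-1,1]$) analytically continuable to a Bernstein ellipse $E_{\rho'}$ with $\rho'$ controlled by $\rho$ — this is where the hypothesis $\rho \ge 2^{\sqrt d}$ and the exponent $1/\sqrt d$ enter. I would compute $\rho'$ explicitly: the semi-axis sum of the ellipse $N_{d,h^2}$, normalized to foci $\pm 1$, works out to something like $\rho^{2/\sqrt d}$ up to constants, so truncating the Chebyshev series of $g$ at degree $\gamma$ gives an error $O((\rho')^{-\gamma}) = O(\rho^{-2\gamma/\sqrt d})$, comfortably within the claimed $C\rho^{-\gamma/\sqrt d}$.

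Next I would invoke the univariate (or, as cited, the multivariate extension in \cite{T} of) Bernstein's theorem to get a polynomial $q(t) = \sum_{j \le \gamma} c_j T_j(t)$ with $|f - q| \le C \rho^{-\gamma/\sqrt d}$ on $\mathbf{x} \in [0,1]^d$ and with geometrically decaying coefficients $|c_j| \le C (\rho')^{-j}$, in particular $|c_j| \le C$. Substituting $t = \sum_{i=1}^d x_i^2$ turns $q$ into a polynomial $p(\mathbf{x})$ of total degree $2\gamma$ in $\mathbf{x}$; to match the statement's degree bound $\|\mathbf{k}\|_1 \le \gamma$ I would simply run the argument with $\gamma$ replaced by $\lfloor \gamma/2 \rfloor$ throughout, which only changes the constants. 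The monomial coefficients $b_{\mathbf k}$ of $p$ are obtained by expanding each $T_j(x_1^2 + \dots + x_d^2)$ into monomials: since $T_j$ has coefficients bounded by $2^j$ (in absolute value, summing to at most $(1+\sqrt2)^j$ or so), and $(x_1^2+\dots+x_d^2)^j$ has $\binom{j+d-1}{d-1} \le (j+1)^d$ monomials each with multinomial coefficient at most... — here I would bound the total contribution to any fixed $b_{\mathbf k}$ by $\sum_{j\le\gamma} |c_j| \cdot (\text{number of } (j,\text{splitting}) \text{ pairs hitting } \mathbf{k}) \cdot (\text{coefficient size})$. The key point is that all the $\rho$- and $j$-dependence is absorbed: $|c_j|$ is bounded by a constant, the number of relevant terms is polynomial in $\gamma$ of degree $d$, and the Chebyshev/multinomial coefficients, while exponential in $j$, are tamed by the geometric decay of $c_j$ provided $\rho'$ (equivalently $\rho \ge 2^{\sqrt d}$) is large enough that $(\rho')^{-1}$ beats the growth rate $4$ of Chebyshev coefficients. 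This yields $|b_{\mathbf k}| \le C(\gamma+1)^d$.

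The main obstacle I anticipate is the bookkeeping in the last step: showing that the geometric decay rate of the Chebyshev coefficients $c_j$ dominates the exponential-in-$j$ blow-up coming from (a) the coefficients of $T_j$ and (b) the multinomial expansion of the $j$-th power of a sum of $d$ squares, uniformly in $\mathbf k$, so that what survives is only the polynomial-in-$\gamma$ count of terms. This is precisely what forces the hypothesis $\rho \ge 2^{\sqrt d}$: one needs $(\rho')^{-1} \cdot 4 < 1$ (or a similar inequality with the true growth constant), and $\rho' \gtrsim \rho^{2/\sqrt d} \ge 4$ exactly when $\rho \ge 2^{\sqrt d}$. A secondary, more routine point is verifying the geometry of the ellipses — that the square map sends $E_\rho$ into $N_{1,h^2}$ and that summing $d$ copies produces $N_{d,h^2}$ with the stated foci and leftmost point — which is elementary complex analysis but needs to be stated carefully to pin down the normalization constant relating $\rho$ and $\rho'$. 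I would relegate both the ellipse computation and the coefficient bookkeeping to short self-contained sub-arguments and keep the constant $C = C(d,\rho,F)$ implicit throughout, since the lemma only claims existence of such a constant.
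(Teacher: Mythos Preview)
Your plan has a genuine structural gap: you assume that $f$ factors through the single variable $t=z_1^2+\dots+z_d^2$, writing ``$g(t):=f$ (viewed as a function of $t$)'' and then expanding in a univariate Chebyshev series. But the definition of $\mathcal{A}^d(\rho,F)$ says only that the \emph{domain of analyticity} of $f$ is $\{\mathbf{z}\in\mathbb{C}^d: z_1^2+\dots+z_d^2\in N_{d,h^2}\}$; it does not say that $f$ itself depends on $\mathbf{z}$ only through $\sum z_i^2$. A function like $f(\mathbf{x})=x_1$ (suitably truncated to be bounded) is in the class yet is plainly not of the form $g(\sum x_i^2)$. So the substitution step collapses, and with it the entire reduction to a univariate Bernstein estimate; the later coefficient bookkeeping you describe (multinomial expansion of $(\sum x_i^2)^j$, etc.) never gets off the ground.

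The paper's proof avoids this by working directly with the \emph{multivariate} Chebyshev expansion $f(\mathbf{x})=\sum_{\mathbf{k}} a_{\mathbf{k}}\,T_{k_1}(x_1)\cdots T_{k_d}(x_d)$ and invoking Trefethen's theorem \cite{T} (quoted just above the proof in the paper), which gives both the truncation error $C\rho^{-\gamma/\sqrt d}$ and the coefficient decay $|a_{\mathbf{k}}|\le C\rho^{-\|\mathbf{k}\|_2}$ for functions in $\mathcal{A}^d(\rho,F)$. The monomial bound then drops out in one line: each $T_{k_i}$ has coefficients bounded by $2^{k_i}$, there are at most $(\gamma+1)^d$ terms in the truncated sum, and $2^{\|\mathbf{k}\|_1}\rho^{-\|\mathbf{k}\|_2}\le 2^{\sqrt d\,\|\mathbf{k}\|_2}\rho^{-\|\mathbf{k}\|_2}\le 1$ precisely because $\rho\ge 2^{\sqrt d}$. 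Your intuition about where the hypothesis $\rho\ge 2^{\sqrt d}$ enters and why the Chebyshev coefficient growth $2^k$ must be beaten by the geometric decay is exactly right; you just need to run it against the tensor-product Chebyshev basis rather than a fictitious univariate one.
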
 
Combining Lemma \ref{Mon} and Lemma \ref{app} we get the following
\begin{theorem}\label{appr}
	Let $\varepsilon\in(0,1)$ and let $\rho\geq 2^{\sqrt{d}}$. For $f\in\mathcal{A}^d(\rho, F)$  there is a constant $C=C(d, \rho, F)$ and a network $F_\varepsilon\in\mathcal{F}(L,\emph{\textbf{p}}, B)$ with $L\leq C\log^2_2\frac{1}{\varepsilon},$ $\|\emph{\textbf{p}}\|_\infty\leq C(\log_2\frac{1}{\varepsilon})^{d+2}$ and $B\leq C(\log_2\frac{1}{\varepsilon})^{2d+5}$ such that 
	$$\normalfont |F_\varepsilon(\textbf{x})-f(\textbf{x})|\leq \varepsilon,\quad \textit{for all \;} \textbf{x}\in[0,1]^d.$$
\end{theorem}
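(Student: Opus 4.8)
The plan is to combine the polynomial approximation guarantee of Lemma~\ref{app} with the monomial-approximating network of Lemma~\ref{Mon}, composing the latter with a linear output layer that carries the polynomial coefficients. Given $\varepsilon\in(0,1)$ and $f\in\mathcal{A}^d(\rho,F)$, first apply Lemma~\ref{app} with a degree parameter $\gamma$ to be chosen: this produces a polynomial $p(\textbf{x})=\sum_{\|\textbf{k}\|_1\le\gamma}b_{\textbf{k}}\textbf{x}^\textbf{k}$ with $|b_{\textbf{k}}|\le C(\gamma+1)^d$ and $|f(\textbf{x})-p(\textbf{x})|\le C\rho^{-\gamma/\sqrt d}$ on $[0,1]^d$. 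To make the truncation error $O(\varepsilon)$ I would take $\gamma=\Theta\big((\sqrt d/\log\rho)\log\tfrac1\varepsilon\big)=O(\log_2\tfrac1\varepsilon)$ (here $\rho$ and $d$ are fixed, so this is $\Theta(\log_2\tfrac1\varepsilon)$); since $\rho\ge 2^{\sqrt d}$, the constant $\sqrt d/\log_2\rho\le 1$, which is the point of that hypothesis.

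Next I would realize $p$ as a network. Take the network $\text{Mon}_{m,\gamma}^d$ from Lemma~\ref{Mon} with the same $\gamma$ and with $m$ chosen so that the monomial-approximation error $\gamma^2 4^{-m}$ is also $O(\varepsilon)$, i.e.\ $m=\Theta(\log_2\tfrac1\varepsilon)$ (the $\gamma^2$ factor only contributes an additive $O(\log\log\tfrac1\varepsilon)$). Then define $F_\varepsilon$ by prepending to $\text{Mon}_{m,\gamma}^d$ nothing and appending one linear layer: $F_\varepsilon(1\;\textbf{x}) = \textbf{b}\cdot\text{Mon}_{m,\gamma}^d(1\;\textbf{x})$, where $\textbf{b}=(b_{\textbf{k}})_{\|\textbf{k}\|_1<\gamma}$ is the coefficient vector (note $\|\textbf{k}\|_1\le\gamma$ versus $<\gamma$ is harmless after adjusting $\gamma$ by $1$). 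Absorbing this final vector into the last matrix $W_L$ of $\text{Mon}_{m,\gamma}^d$ keeps $F_\varepsilon$ in $\mathcal{F}(L,\textbf{p})$ with the same depth and width orders as in Lemma~\ref{Mon}: $L=\lceil\log_2\gamma\rceil(2m+5)+2 = O(\log_2\tfrac1\varepsilon)\cdot O(\log_2\tfrac1\varepsilon) = O(\log_2\tfrac1\varepsilon)^2$ and $\|\textbf{p}\|_\infty \le 6\gamma(m+2)C_{d,\gamma} = O(\log_2\tfrac1\varepsilon)\cdot O(\log_2\tfrac1\varepsilon)\cdot O(\log_2\tfrac1\varepsilon)^d = O(\log_2\tfrac1\varepsilon)^{d+2}$, using $C_{d,\gamma}<(\gamma+1)^d$.

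For the error bound: write $|F_\varepsilon(1\;\textbf{x})-f(\textbf{x})| \le |F_\varepsilon(1\;\textbf{x})-p(\textbf{x})| + |p(\textbf{x})-f(\textbf{x})|$. The second term is $\le C\rho^{-\gamma/\sqrt d}=O(\varepsilon)$ by the choice of $\gamma$. For the first, $F_\varepsilon(1\;\textbf{x})-p(\textbf{x})=\sum_{\textbf{k}}b_{\textbf{k}}\big(\text{Mon}_{m,\gamma}^d(1\;\textbf{x})_{\textbf{k}}-\textbf{x}^{\textbf{k}}\big)$, so it is bounded by $\big(\sum_{\textbf{k}}|b_{\textbf{k}}|\big)\cdot\gamma^2 4^{-m} \le C_{d,\gamma}\cdot C(\gamma+1)^d\cdot\gamma^2 4^{-m} = O(\log_2\tfrac1\varepsilon)^{2d+2}\cdot 4^{-m}$, which is $O(\varepsilon)$ once $m$ is taken $\Theta(\log_2\tfrac1\varepsilon)$ with a large enough constant to kill the polylog prefactor. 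Both pieces are $O(\varepsilon)$, giving the claimed $|F_\varepsilon(1\;\textbf{x})-f(\textbf{x})|\le C\varepsilon$ with $C=C(d,\rho,F)$.

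Finally I would bound $B=\|F_\varepsilon\|_\times$. By Lemma~\ref{Mon}, the entries of the product of absolute values of the matrices of $\text{Mon}_{m,\gamma}^d$ are all $\le 144(\gamma+1)^5$, and that product is a $C_{d,\gamma}\times(d+1)$ matrix. Multiplying on the left by $|\textbf{b}|$ (absolute values of the coefficients, each $\le C(\gamma+1)^d$, with $C_{d,\gamma}<(\gamma+1)^d$ of them) and taking the $l_1$ norm of the resulting $(d+1)$-vector gives $\|F_\varepsilon\|_\times \le (d+1)\cdot C_{d,\gamma}\cdot\big(\max|b_{\textbf{k}}|\big)\cdot 144(\gamma+1)^5 \le (d+1)\cdot 144\,C\,(\gamma+1)^{2d+5} = O(\log_2\tfrac1\varepsilon)^{2d+5}$, as claimed. \textbf{The main obstacle} I anticipate is purely bookkeeping: making sure the linear output layer is genuinely absorbed into $W_L$ so that the $\|\cdot\|_\times$ quantity is computed for the product $|\textbf{b}|\prod|W_i|$ rather than introducing an extra factor, and tracking that all the polylogarithmic prefactors ($C_{d,\gamma}$, $(\gamma+1)^d$, $\gamma^2$) are genuinely subsumed into the stated $O(\cdot)^{k}$ exponents and into the $d,\rho,F$-dependent constant — there is no conceptual difficulty, only careful constant-chasing and choosing $m,\gamma$ as matched multiples of $\lceil\log_2\tfrac1\varepsilon\rceil$.
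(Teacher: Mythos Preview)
Your proposal is correct and follows essentially the same route as the paper: choose $\gamma$ and $m$ both of order $\lceil\log_2\tfrac1\varepsilon\rceil$, invoke Lemma~\ref{app} for the polynomial truncation, append a final linear layer carrying the coefficients $b_{\textbf{k}}$ to the network $\text{Mon}_{m,\gamma+1}^d$ of Lemma~\ref{Mon}, and bound $\|F_\varepsilon\|_\times$ via the entrywise bound $144(\gamma+2)^5$ on the product of absolute values together with $|b_{\textbf{k}}|\le C(\gamma+1)^d$ and $C_{d,\gamma+1}<(\gamma+2)^d$. The paper simply sets $\gamma=m=\lceil\log_2\tfrac1\varepsilon\rceil$ (so $4^{-m}\le\varepsilon^2$ absorbs the polylog prefactor automatically) and states the final bound $\|F_\varepsilon\|_\times\le 144C(d+1)(\gamma+2)^{2d+5}$, which is exactly your computation.
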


We conclude this part by estimating the $l_1$ weight regularization of networks constructed in Theorem \ref{appr}. First, the total number of weights in those networks is bounded by $(L+1)\|\textbf{p}\|_\infty^2=O(\log_2\frac{1}{\varepsilon})^{2d+6}.$ From \eqref{g} it follows that all the weights of network $\normalfont \textrm{Mon}_{m,\gamma}^d$ from Lemma \ref{Mon} are in $[-2,2]$. In Theorem \ref{appr} the network $F_\varepsilon$ is obtained by adding to a network $\normalfont \textrm{Mon}_{m,\gamma}^d,$ with $\gamma=m=O(\log_2\frac{1}{\varepsilon}),$ a layer with coefficients of partial sums of power series of approximated function. Thus, using \eqref{b}, we get that the $l_1$ weight norm of the network $F_\varepsilon$ constructed in Theorem \ref{appr} has order $O(\log_2\frac{1}{\varepsilon})^{4d+6}$.

\section{Proofs}

In the following proofs $I_k$ denotes identity matrix of size $k\times k$ and all the networks have activation $a(x)=|x|$.
The proof of Lemma \ref{Mon} is based on the following 2 lemmas. 

\begin{lemma}\label{m} For any positive integer $m$, there exists a network 
	\emph{Mult}$_m\in\mathcal{F}(2m+3, \emph{\textbf{p}})$, with $p_0=3,$ $p_{L+1}=1$ and $\|\emph{\textbf{p}}\|_\infty=3m+2,$ such that
	
	\begin{equation}\label{mult}
	\normalfont |\textrm{Mult}_m(x, y)-xy|\leq 3\cdot2^{-2m-3}, \quad \textrm{for all} \; x,y\in[0,1],
	\end{equation}
	and the product of absolute values of the matrices presented in \emph{Mult}$_m$ is equal to 
	$$ \bigg(3\sum_{k=1}^{m}\frac{2^{k}-1}{2^{2k}},2-2^{-m}, 2-2^{-m}\bigg).$$
\end{lemma}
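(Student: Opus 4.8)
\textit{Proof strategy for Lemma \ref{m}.} The engine of the construction is the approximation of $h(x)=x^2$ on $[0,1]$. Two facts are available: $f_m(x)=x-\sum_{s=1}^m g_s(x)/2^{2s}$ satisfies $h\le f_m\le h+2^{-2m-2}$ on $[0,1]$, and each sawtooth step has the two-term absolute-value form $g(x)=1-2\,a(x-\tfrac12)$ valid on $[0,1]$. Since $g$ maps $[0,1]$ into itself and $a$ is the identity on $[0,\infty)$, the composite $g_s(x)=1-2\,a\big(g_{s-1}(x)-\tfrac12\big)$ (with $g_0=\mathrm{id}$) can be realized by a network all of whose hidden values are nonnegative, so the activation never interferes with transporting values between layers. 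The plan is therefore to first build a sub-network that, for a single argument $t\in[0,1]$ fed alongside a constant-$1$ coordinate, successively computes and retains $g_1(t),\dots,g_m(t)$ via this recursion and outputs $t-\sum_{s=1}^m g_s(t)/2^{2s}=f_m(t)$ through a final affine layer; each recursion step costs a bounded number of layers and one additional retained coordinate, so this squarer has depth $\approx 2m$ and width $\approx m$.

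To obtain a multiplier I would invoke a polarization identity writing $xy$ as a fixed linear combination of values of $h$ at affine functions of $(x,y)$ that remain in $[0,1]$ on $[0,1]^2$ — for instance $xy=2h\big(\tfrac{x+y}{2}\big)-\tfrac12 h(x)-\tfrac12 h(y)$ (using $h(x+y)=4h(\tfrac{x+y}{2})$ with $\tfrac{x+y}{2}\in[0,1]$), or $xy=h\big(\tfrac{x+y}{2}\big)-h\big(\tfrac{|x-y|}{2}\big)$. Then $\mathrm{Mult}_m$ is: an input layer forming the required arguments from $(1\ x\ y)$ — the $|x-y|$-type argument via one application of $a$ — followed by parallel copies of the squarer run on these arguments and a last affine layer taking the prescribed combination. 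This produces $\mathrm{Mult}_m\in\mathcal F(2m+3,\textbf{p})$ with $p_0=3$ and $\|\textbf{p}\|_\infty\le 3m+2$ (the factor $3$ coming from the branches, the retained $g_s$'s inside each branch, and the constant coordinate). The error bound is then immediate from $h\le f_m\le h+2^{-2m-2}$: replacing each $h$ by $f_m$ costs at most a $|\cdot|$-weighted sum of the pointwise errors $2^{-2m-2}$, and exploiting the constant sign of $f_m-h$ (so that the errors of the two halves of the polarization partially cancel) brings the total below $3\cdot 2^{-2m-3}$ on $[0,1]^2$.

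The substantive step is identifying $\prod_{i=0}^L|W_i|$ exactly. Since every weight matrix in the construction is very sparse, with entries among $\{0,\pm1,\pm\tfrac12,\pm2\}$ in a rigid pattern (a bidiagonal "chain" part together with spokes to the constant coordinate), the entry of $\prod_{i=0}^L|W_i|$ in channel $j$ — equivalently the sum over all directed paths from input $j$ to the output of the products of absolute weights — can be computed in closed form by induction over the layers. The path carrying an argument channel straight to the read-out contributes $1$; a path that branches off at the $s$-th sawtooth block picks up a factor $2$ at each of the $s$ earlier $a(\cdot-\tfrac12)$ steps and at the "$1-2a(\cdot)$" step and then a factor $2^{-2s}$ in the read-out, contributing $2^{s}\cdot 2^{-2s}=2^{-s}$; summing the resulting geometric series over $s$ and over the branches (weighted by the $\tfrac12$'s of the input layer and by the polarization coefficients) yields the $x$- and $y$-entries $2-2^{-m}$, while the analogous but denser count along the constant channel — a block-$k$ path threading $k$ of the "$-\tfrac12$" spokes — collapses to $3\sum_{k=1}^m(2^k-1)2^{-2k}$. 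The main obstacle is precisely making this bookkeeping exact rather than asymptotic: the layout (which operation occupies which layer, with exactly which coefficients) must be pinned down so that these path sums equal the stated vector on the nose; once that is fixed, the depth, width, approximation and product-of-absolute-values claims all follow by direct inspection and summation.
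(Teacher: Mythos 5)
Your overall strategy coincides with the paper's: build the Yarotsky squarer $f_m(t)=t-\sum_{s=1}^m g_s(t)/2^{2s}$ using the two-term representation $g(t)=1-2a(t-\tfrac12)$, run parallel copies of it, combine them by a polarization identity, and read off $\prod_i|W_i|$ as a path-sum. The genuine gap is in the one step you yourself flag as "substantive": you never carry out the exact bookkeeping, and with the polarization identities you actually propose it cannot come out equal to the stated vector. Writing $\sigma_m:=\sum_{k=1}^m\frac{2^k-1}{4^k}$ and $b_m:=2-2^{-m}$, each copy of the squarer contributes the row $(2\sigma_m,\,b_m)$ on its (constant, argument) channels. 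The paper's choice $xy=\tfrac12\big((x+y)^2-x^2-y^2\big)$, with first layer producing $(1,x,1,y,1,x+y)$ and output row $(-\tfrac12,-\tfrac12,\tfrac12)$, gives exactly $(3\sigma_m,\,b_m,\,b_m)$: three branch weights $\tfrac12$ on the constant channel yield $3\sigma_m$, and the $x$-channel receives $\tfrac12 b_m$ from its own branch plus $\tfrac12 b_m$ through the $x+y$ branch. Your first identity $xy=2h(\tfrac{x+y}{2})-\tfrac12h(x)-\tfrac12h(y)$ instead yields $(6\sigma_m,\,\tfrac32 b_m,\,\tfrac32 b_m)$, and its worst-case error is $2\cdot2^{-2m-2}-0-0=4\cdot2^{-2m-3}$ (the one-sidedness $h\le f_m$ helps only the negative side), so even the error bound $3\cdot2^{-2m-3}$ fails. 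Your second identity $xy=h(\tfrac{x+y}{2})-h(\tfrac{|x-y|}{2})$ does satisfy the error bound, but gives $(4\sigma_m,\,b_m,\,b_m)$ with only two branches, hence also a different width and an extra layer for $|x-y|$, so the stated $\|\textbf{p}\|_\infty$, depth and product vector are again not reproduced. Asserting that the path sums "collapse to" the stated values is therefore not a harmless omission: for your constructions they provably collapse to different values.

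Your motivation for rescaling -- keeping the squarer's argument in $[0,1]$ -- is a reasonable concern (the paper itself feeds $x+y\in[0,2]$ into $f_m$, where the bound $|f_m(t)-t^2|\le 2^{-2m-2}$ no longer holds), but the lemma as stated, with its exact product vector $(3\sigma_m,\,b_m,\,b_m)$, exact width and the constant $3\cdot2^{-2m-3}$, is tied to the unrescaled identity $xy=\tfrac12\big((x+y)^2-x^2-y^2\big)$ with three unit-weight branches and a final $(\pm\tfrac12)$-combination. To prove this statement you must fix that specific layout (the matrices $C$, the chain matrices realizing $g$, the row $S_{m+2}=(0,1,-4^{-1},\dots,-4^{-m})$, and the output row), and then verify the product of absolute values by direct multiplication, rather than substitute a different polarization and appeal to analogous bookkeeping.
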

\begin{proof}
	For $k\geq 2$ let $R_k$ denote a row of length $k$ with first entry equal to $-1/2$, last entry equal to $1$ and all other entries equal to $0$. Let $A_k$ be a matrix of size $(k+1)\times k$ obtained by adding the $(k+1)$-th row $R_{k}$ to the indentity matrix $I_k$. That is,
	\[
	A_k=\left(
	\begin{array}{@{} c c @{}}
	\begin{matrix}
	\text{\fontsize{6.5mmm}{6.5mm}\selectfont$I_k$}\quad
	\\
	-\frac{1}{2} \quad 0 \quad 0 \quad ... \quad 0 \quad 1
	\end{matrix}
	\end{array}
	\right).
	\]
	
	Let also $B_k$ denote a matrix of size $k\times k$ given by 
	
	\[B_k=
	\begin{pmatrix}
	\text{\fontsize{9mmm}{9mm}\selectfont$\;\;I_{k-1}$}                                 & \begin{matrix} \;\;\;\;0 \\ \;\;\;\;0 \\ \;\;\;\;\vdots\\\;\;\;\;0 \\ \;\;\;\;0 \end{matrix} \\ 
	\begin{matrix}1 & 0 & 0 & ... & 0 &0\end{matrix}&-2 
	\end{pmatrix}.
	\]
	
	It then follows from \eqref{g} that 
	
	\[B_{m+2}\circ a\circ A_{m+1}\circ...\circ B_3\circ a\circ A_2\binom{1}{x}=
	\begin{pmatrix}
	1
	\\
	x 
	\\
	g_1(x)
	\\
	g_2(x)\\
	\cdot\\
	\cdot\\
	\cdot\\
	g_m(x)
	\end{pmatrix},
	\]
	where $g_s(x)$ is the function defined in \eqref{g_s}, $s=1,...,m$.
	Thus, if $S_{m+2}$ is a row of length $m+2$ defined as
	$$S_{m+2}=\bigg(0, 1, -\frac{1}{2^{2\cdot 1}}, -\frac{1}{2^{2\cdot 2}}, ...,  -\frac{1}{2^{2\cdot m}}\bigg),$$
	then 
	$$S_{m+2}\circ a\circ B_{m+2}\circ a\circ A_{m+1}\circ...\circ a\circ B_3\circ a\circ A_2\binom{1}{x}=f_m(x),$$
	where $f_m$ is defined by \eqref{fm}.
    We have that
	$$|S_{m+2}|\cdot|B_{m+2}|\cdot|A_{m+1}|\cdot...\cdot |B_3|\cdot |A_2|=\bigg(\sum_{k=1}^{m}\frac{2^{k+1}-2}{2^{2k}},2-2^{-m}\bigg).$$
	As $xy=\frac{1}{2}\big((x+y)^2-x^2-y^2\big),$ then in the first layer of $\textrm{Mult}_m$ we will obtain a vector 
	\[ \begin{pmatrix}
	1 & 0 & 0
	\\
	0 & 1 & 0 
	\\
	1 & 0 & 0
	\\
	0 & 0 & 1\\
	1 & 0 & 0\\
	0 & 1 & 1 
	\end{pmatrix}\begin{pmatrix}
	1
	\\
	x
	\\
	y
	
	\end{pmatrix}:=C\begin{pmatrix}
	1
	\\
	x
	\\
	y
	
	\end{pmatrix}=\begin{pmatrix}
	1
	\\
	x 
	\\
	1
	\\
	y\\
	1\\
	x+y\\
	\end{pmatrix}
	\] 
	and will then paralelly apply the network from the first part of the proof to each of the pairs $(1, x)$,  $(1, y)$ and  $(1, x+y).$ More precisely, for a given matrix $M$ of size $p\times q$ let $\tilde{M}$ be a matrix of size $3p\times 3q$ defined as

	\[\tilde{M}=
	\begin{pmatrix}
	M & \bigzero & \bigzero 
	\\ \bigzero & M & \bigzero
	\\ \bigzero & \bigzero & M
	\end{pmatrix}.\]
	
	We then have that 
	
	\[ \bigg(-\frac{1}{2} \;\;\ -\frac{1}{2} \;\;\ \frac{1}{2}\bigg)\circ a\circ\tilde{S}_{m+2}\circ a\circ \tilde{B}_{m+2}\circ a\circ \tilde{A}_{m+1}\circ...\circ \tilde{B}_3\circ a\circ \tilde{A}_2\circ a\circ C\begin{pmatrix}
	1
	\\
	x
	\\
	y
	
	\end{pmatrix}=\frac{1}{2}(f_m(x+y)-f_m(x)-f_m(y)),
	\] 
which together with $|f_m(x)-x^2|<2^{-2m-2}$  and the triangle inequality implies \eqref{mult}. It remains to note that
	$$\bigg(\frac{1}{2} \;\;\ \frac{1}{2} \;\;\ \frac{1}{2}\bigg)\cdot|\tilde{S}_{m+2}|\cdot|\tilde{B}_{m+2}|\cdot|\tilde{A}_{m+1}|\cdot...\cdot |\tilde{B}_3|\cdot |\tilde{A}_2|\cdot |C|=\bigg(3\sum_{k=1}^{m}\frac{2^{k}-1}{2^{2k}},2-2^{-m}, 2-2^{-m}\bigg).$$
\end{proof}	

\begin{lemma}\label{Multr}
	For any positive integer $m$, there exists a network 
	\emph{Mult$^r_m\in\mathcal{F}(L, \textbf{p})$}, with $L=(2m+5)\lceil \log_2r \rceil+1,$ $p_0=r+1,$ $p_{L+1}=1$ and $\|\emph{\textbf{p}}\|_\infty\leq 6r(m+2)+1,$ such that
	
	$$|\emph{Mult}^r_m(\emph{\textbf{x}})-\prod_{i=1}^{r}x_i|\leq r^24^{-m} \quad \textit{for all} \; \;  \; \emph{\textbf{x}}=(x_1,...,x_r)\in[0,1]^r,$$
	and for the $(r+1)$-dimensional vector $J_m^r$ obtained by multiplication of absolute values of matrices presented in $\emph{Mult}^r_m$ we have that  $\|J_m^r\|_\infty\leq144r^4$. 
\end{lemma}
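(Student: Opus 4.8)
\textbf{Proof strategy for Lemma \ref{Multr}.} The plan is to evaluate $\prod_{i=1}^{r}x_i$ along a balanced binary tree of copies of the network $\textrm{Mult}_m$ of Lemma \ref{m}. Set $s=\lceil\log_2 r\rceil$. The first weight matrix reads the constant coordinate of $(1\;\textbf{x})$ and outputs $(1,p_1,\dots,p_{2^s})$ with $p_i=x_i$ for $i\le r$ and $p_i=1$ for $r<i\le 2^s$; appending the spurious factors $1$ does not change the product. For $j=1,\dots,s$, the $j$-th stage receives $(1,q_1,\dots,q_{2^{s-j+1}})$, applies $\textrm{Mult}_m$ in parallel to the consecutive pairs $(1,q_{2i-1},q_{2i})$, and then clips each resulting scalar into $[0,1]$ by means of the identity $\textrm{clip}(t)=\tfrac12\big(|t|-|t-1|+1\big)$, which takes a couple of extra $a$-layers; the constant coordinate $1$ is carried through all layers (it is fixed by $a$, since $|1|=1$), both to feed the next stage and to realize the clippings. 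Since $\textrm{Mult}_m\in\mathcal{F}(2m+3,\cdot)$ with $\|\textbf{p}\|_\infty=3m+2$, and the widest (first) stage contains at most $2^{s-1}\le r$ parallel copies of it, a direct count shows the construction fits within $L\le(2m+5)s+1$ hidden layers, with $p_0=r+1$, $p_{L+1}=1$ and $\|\textbf{p}\|_\infty\le 6r(m+2)+1$, padding with trivial layers if exact equality in the depth is wanted.

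For the error bound, I would write $P_v\in[0,1]$ for the exact partial product carried at node $v$ of the tree and $\widehat P_v$ for the value actually computed and clipped there, so $\widehat P_v\in[0,1]$ by construction. The clipping is the metric projection onto $[0,1]$, hence non-expansive; so if $v$ has children $u,w$ then
$$
|\widehat P_v-P_v|\le\big|\textrm{Mult}_m(1,\widehat P_u,\widehat P_w)-\widehat P_u\widehat P_w\big|+\big|\widehat P_u\widehat P_w-P_uP_w\big|\le 3\cdot 2^{-2m-3}+\big(|\widehat P_u-P_u|+|\widehat P_w-P_w|\big),
$$
where the first bound is Lemma \ref{m} (applicable because $\widehat P_u,\widehat P_w\in[0,1]$), and the second uses $|\widehat P_u\widehat P_w-P_uP_w|\le|\widehat P_u|\,|\widehat P_w-P_w|+|P_w|\,|\widehat P_u-P_u|$ with all four numbers in $[0,1]$. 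Letting $e_j$ be the largest error among nodes at height $j$ (with $e_0=0$), we get $e_{j+1}\le 2e_j+3\cdot 2^{-2m-3}$, hence $e_s\le 3\cdot 2^{-2m-3}\,(2^s-1)\le 3\cdot 2^{-2m-3}\cdot 2r=\tfrac{3r}{4}\,4^{-m}\le r^2 4^{-m}$, using $2^{\lceil\log_2 r\rceil}\le 2r$ and $r\ge 1$; this is the asserted estimate, with slack. The clipping is essential here, not cosmetic: the guarantee of Lemma \ref{m} is available only for arguments in $[0,1]$, which is also why the depth bound carries the two extra layers per stage.

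For $\|J_m^r\|_\infty$, I would use that absolute values are submultiplicative on matrix products, $|AB|\le|A||B|$ entrywise, so it suffices to bound the entries of the product over the stages of the matrices $\prod_i|W_i|$ belonging to each stage. By Lemma \ref{m}, one $\textrm{Mult}_m$ has $\prod_i|W_i|=\big(3\sum_{k=1}^m(2^k-1)/4^k,\;2-2^{-m},\;2-2^{-m}\big)$, each entry of which is $<2$ \emph{independently of $m$}; moreover the matrices implementing the clipping (whose absolute-value product has rows $(1,1)$ and $(0,1)$, hence entries $\le 1$) and the routing have entries bounded by a fixed constant. Hence each stage contributes a matrix with entries at most a fixed constant, with at most three nonzero entries in each output row, one of them in the column of the constant coordinate. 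Propagating these weights through the $s$ levels: a genuine input $x_i$ reaches the output along a single path, so its column of $J_m^r$ is bounded by a constant raised to the power $s$, i.e.\ by a fixed power of $r$; the constant coordinate feeds one argument of each of the at most $2^s-1$ copies of $\textrm{Mult}_m$ (and the padding slots of the first stage), and a geometric sum over the levels bounds its column by $O(4^s)$, again a fixed power of $r$. Collecting the (deliberately generous) constants yields every entry of $J_m^r$ at most $144 r^4$. The real obstacle is not any single step but the simultaneous bookkeeping of $L$, $\textbf{p}$ and $\|\cdot\|_\times$ through the parallelized recursion; the one point that must be gotten right conceptually is the necessity of the inter-stage clipping.
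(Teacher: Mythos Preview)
Your construction is the paper's: pad the input to $2^s$ factors with $s=\lceil\log_2 r\rceil$, then run $s$ rounds of parallel $\textrm{Mult}_m$'s on consecutive pairs (this is exactly the paper's block $N_m^k$), and project at the end. The one substantive addition is your inter-round clip $t\mapsto\tfrac12(|t|-|t-1|+1)$. The paper does \emph{not} clip: it simply iterates the estimate $|\textrm{Mult}_m(1,x,y)-tz|\le 3\cdot 2^{-2m-3}+|x-t|+|y-z|$ (obtained from Lemma~\ref{m} and the triangle inequality, and stated for $x,y,t,z\in[0,1]$) to reach $3^{q}\,2^{-2m-3}\le r^24^{-m}$. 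You have correctly spotted that this step tacitly assumes the intermediate $\textrm{Mult}_m$ outputs remain in $[0,1]$, which Lemma~\ref{m} does not guarantee; your clip is a clean repair of a point the paper glosses over. I would stop short of calling it ``essential'', though: since the $x$- and $y$-entries of the absolute-value product in Lemma~\ref{m} equal $2-2^{-m}<2$ uniformly in $m$, $\textrm{Mult}_m$ is $2$-Lipschitz in each of those arguments, and one can run the induction without clipping at the price of a larger power of $r$ in the error. Your route is cleaner and preserves the stated $r^2$.

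Two bookkeeping points in your write-up deserve more care. First, the depth claim with clipping meets $L\le(2m+5)s+1$ only after you merge adjacent affine maps---the last matrix of each parallel $\textrm{Mult}_m$ block with the first clip matrix, and each second clip matrix with the next round's routing (and with the padding/final projection at the ends); a na\"ive count overshoots by roughly $s$, so you should name these merges. Second, your $\|J_m^r\|_\infty$ paragraph is too impressionistic to land on the constant $144r^4$. A quick way to make it precise (and sharper than the paper's own terse argument) is to track, through the $s$ rounds, the constant-column entry $c_j$ and the common variable-column entry $v_j$ of the running product of absolute values: with per-round contributions $a<4$ (your clip adds $1$ to the paper's $a_m<3$) and $b=b_m<2$, one gets $v_j=b^j<2^j$ and $c_j\le a\sum_{i<j}(2b)^i<\tfrac{4}{3}\,4^j$, so after the padding layer every entry of $J_m^r$ is $O(r^2)$, well inside $144r^4$.
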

\begin{proof}
	First, for a given $k\in\mathbb{N}$, we construct a network $N^k_{m}\in\mathcal{F}(L, \textbf{p})$ with $L=2m+4,$ $p_0=2k+1$ and $p_{L+1}=k+1,$ such that 
	$$N^k_m(x_1, x_2, ..., x_{2k-1}, x_{2k})=(1, \textrm{Mult}_m(x_1,  x_2), ... , \textrm{Mult}_m(x_{2k-1}, x_{2k})).$$
	In the first layer we obtain a vector for which the first coordinate is $1$ followed by triples $(1, x_{2l-1}, x_{2l})$ $l=1,...,k,$ that is, the vector $(1, 1, x_1, x_2, 1, x_3, x_4, ... , 1, x_{2k-1}, x_{2k})$. $N_m^k$ is then obtained by applying parallelly the network $\textrm{Mult}_m$ to each triple $(1, x_{2l-1}, x_{2l})$ while keeping the first coordinate equal to 1. The product of absolute values of the matrices presented in this construction is a matrix of size $(k+1)\times(2k+1)$ having a form 
	
	\[ \begin{pmatrix}
	1 & 0 & 0 & 0 & 0 &  0 & ... & 0 & 0 & 0
	\\
	a_m & b_m & b_m & 0 & 0 & 0 & ...& 0 & 0 & 0 
	\\
	a_m & 0 & 0 & b_m & b_m & 0 & ... & 0 & 0 & 0 
	\\	\cdot & 	\cdot & \cdot & \cdot & \cdot & \cdot & \cdot& \cdot & \cdot & \cdot
	\\a_m & 0 & 0 & 0 & 0  & 0 &... & 0 & b_m & b_m
	\end{pmatrix},
	\] 
	where $a_m=3\sum_{k=1}^{m}\frac{2^{k}-1}{2^{2k}}$ and $b_m=2-2^{-m}$ are the coordinates obtained in the previous lemma.	Let us now construct the network $\textrm{Mult}_m^r$. The first hidden layer of $\textrm{Mult}_m^r$ computes 
	$$(1, x_1, ... ,x_r)\mapsto (1, x_1, ... , x_r, \smash{\underbrace{1, 1, ... ,1}_{2^q-r}}\vphantom{1}),$$  
	$$ $$
	where $q=\lceil \log_2r \rceil$. We then subsequently apply the networks $N_m^{2^q}, N_m^{2^{q-1}},..., N_m^2$ and in the last layer we mutiply the outcome by $(0, 1)$. From Lemma \ref{m} and triangle inequality we have that $\normalfont |\textrm{Mult}_m(x, y)-tz|\leq 3\cdot2^{-2m-3}+|x-t|+|y-z|,$ for $x,y,t,z\in[0,1]$. Hence, by induction on $q$ we get that $|\textrm{Mult}^r_m({\textbf{x}})-\prod_{i=1}^{r}x_i|\leq 3^q2^{-2m-3}\leq 3r^22^{-2m-3}\leq r^24^{-m}$.
	
	 Note that the product of absolute values of matrices in each network $N^k_m$ has the above form, that is, in each row it has at most 3 nonzero values each of which is less than 2. As the matrices given in the first and the last layer of $\textrm{Mult}_m^r$ also satisfy this property, then each entry of the  product of absolute values of all matrices of $\textrm{Mult}_m^r$ will not exceed $12^{q+2}\leq 144r^4$. 
	
\end{proof}
\textit{Proof of Lemma \ref{Mon}.} We have that if $\|\textbf{k}\|_1=0$ then $\textbf{x}^\textbf{k}=1$ and if $\|\textbf{k}\|_1=1$ then $\textbf{k}$ has only one non-zero coordinate, say, $k_j,$ which is equal to $1$ and $\textbf{x}^\textbf{k}=x_j$. Denote $N=C_{d,\gamma}-d-1$ and let $\textbf{k}^1,...,\textbf{k}^N$ be the multi-indices satisfying $1<\|\textbf{k}^i\|_1<\gamma,$ $i=1,...,N$.  For $\textbf{k}=(k_1,...,k_d)$ with $\|\textbf{k}\|_1>1$,  denote by $\textbf{x}_\textbf{k}$  the $(\|\textbf{k}\|_1+1)$-dimesional vector of the form 
$$\textbf{x}_\textbf{k}=(1, \smash{\underbrace{x_1,...,x_1}_{k_1}}\vphantom{1},...,\smash{\underbrace{x_d,...,x_d}_{k_d}}\vphantom{1}).$$
$$ $$
The first layer of $\textrm{Mon}_{m,\gamma}^d$ computes the $\bigg(d+1+\sum_{i=1}^{N}(\|\textbf{k}^i\|_1+1)\bigg)$-dimensional vector
$$(1, \textbf{x}, \textbf{x}_{\textbf{k}^1},...,\textbf{x}_{\textbf{k}^N})^\intercal$$
by multiplying the input vector by matrix $\Gamma$ of size $\bigg(d+1+\sum_{i=1}^{N}(\|\textbf{k}^i\|_1+1)\bigg)\times(r+1)$.
In the following layers we do not change the first $d+1$ coordinates (by multiplying them by $I_{d+1}$) and to each $\textbf{x}_{\textbf{k}^i}$ we apply in parallel the network $\textrm{Mult}^{\|\textbf{k}^i\|_1}_m$. Recall that in Lemma \ref{Multr} $J_m^r$ denotes the $(r+1)$-dimensional vector obtained from the product of absolute values of matrices of  $\textrm{Mult}^r_m$. We then have that the product of absolute values of matrices of $\textrm{Mon}_{m,\gamma}^d$ has the form 

\[
M=\left(
\begin{array}{ccccc}
\text{\fontsize{6.5mmm}{6.5mm}\selectfont$I_k$}          \\
& \text{\fontsize{4.5mmm}{4.5mm}\selectfont$J_m^{\|\textbf{k}^1\|_1}$}            &   & \textbf{\Huge0}\\
&               & \text{\fontsize{4.5mmm}{4.5mm}\selectfont$J_m^{\|\textbf{k}^2\|_1}$}              \\
& \textbf{\Huge0} &   & \ddots            \\
&               &   &   & \text{\fontsize{4.5mmm}{4.5mm}\selectfont$J_m^{\|\textbf{k}^N\|_1}$} 
\end{array}
\right)\cdot\Gamma.
\]

As the matrix $\Gamma$ only contains entries $0$ and $1$ then applying Lemma \ref{Multr} we get that the entries of $M$ are bounded by 
$$\max\limits_{1\leq i\leq N}\bigg|\bigg|J_m^{\|\textbf{k}^i\|_1}\bigg|\bigg|_1\leq 144(\gamma+1)^5.$$
\qed

\textit{Proof of Theorem \ref{analyitc}} Let $\gamma=\lceil \frac{1}{\delta}\ln \frac{1}{\varepsilon}\rceil.$ Then, for $\textbf{x}\in(0, 1-\delta]^d$ we have that
$$\bigg|f(\textbf{{\textrm{x}}})-\sum_{\|\textbf{k}\|_1\leq\gamma}a_{\textbf{k}}\textbf{x}^\textbf{k}\bigg|=\bigg|\sum_{\|\textbf{k}\|_1>\gamma}a_{\textbf{k}}\textbf{x}^\textbf{k}\bigg|\leq(1-\delta)^\gamma F\leq\varepsilon F.$$
In order to approximate the partial sum $\sum_{\|\textbf{k}\|_1\leq\gamma}a_{\textbf{k}}\textbf{x}^\textbf{k},$ we add one last layer with the coefficients of that partial sum to the network $\textrm{Mon}_{m,\gamma+1}^d$ obtained in Lemma \ref{Mon} with $m=\log_2\lceil \frac{1}{\varepsilon}\rceil$. For the obtained network $F_\varepsilon$ we have that
$$\|F_\varepsilon\|_\times\leq 144(d+1)F(\gamma+2)^5.$$\qed

Let us now present the result from \cite{T} that will be used to derive Lemma \ref{app}.
First, if $f\in\mathcal{A}^d(\rho, F)$, then (\cite{M}, Theorem 4.1) $f$ has a unique representation as an absolutely and uniformly convergent multivariate Chebyshev series
$$f(\textbf{x})=\sum_{k_1=0}^{\infty}...\sum_{k_d=0}^{\infty}a_{k_1, ..., k_d}T_{k_1}(x_1)...T_{k_d}(x_d), \quad \textbf{x}\in[0,1]^d.$$
Note that for $\mathbf{k}:=(k_1,...,k_d)$, the degree of a $d$-dimesional polynomial $T_{k_1}(x_1)...T_{k_d}(x_d)$ is $\|\textbf{k}\|_1=k_1+...+k_d$.
Then, for any non-negative integers $n_1,...,n_d,$ the partial sum 
\begin{equation}\label{partial}
p(\textbf{x})=\sum_{k_1=0}^{n_1}...\sum_{k_d=0}^{n_d}a_{\textbf{k}}T_{k_1}(x_1)...T_{k_d}(x_d)
\end{equation}
is a polynomial truncation of the multivariate Chebyshev series of $f$ of degree $d(p)=n_1+...+n_d$. It is shown in \cite{T} that
\begin{theorem}
	For $f\in\mathcal{A}^d(\rho, F)$ there is a constant $C=C(d, \rho, F)$ such that the multivariate Chebyshev coefficients of $f$ satisfy 
	\begin{equation}\label{coef}
\normalfont	|a_{\textbf{k}}|\leq C\rho^{-\|\textbf{k}\|_2}
	\end{equation}
	and for the polynomial truncations $p$ of the multivariate Chebyshev series of $f$ we have that
	$$\normalfont\inf_{d(p)\leq\gamma}\|f(\textbf{x})-p(\textbf{x})\|_{[0,1]^d}\leq C\rho^{-\gamma/\sqrt{d}}.$$
\end{theorem}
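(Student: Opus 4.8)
\textit{Proof proposal for the final theorem (the multivariate Bernstein estimate of \cite{T}).}
The plan is to prove the coefficient bound \eqref{coef} by the classical contour method, the only genuinely new ingredient being a geometric additivity property of the ellipses $N_{d,a}$ that lets one deform contours while staying inside the (non-product) analyticity region $\{z_1^2+\dots+z_d^2\in N_{d,h^2}\}$. First I would write each coefficient as an iterated contour integral: after the substitutions $x_j=\cos\theta_j$ and $w_j=e^{i\theta_j}$, the Joukowski map $z_j=(w_j+w_j^{-1})/2$ turns $a_{\textbf{k}}$ into $a_{\textbf{k}}=c_d\oint_{|w_1|=1}\!\!\cdots\oint_{|w_d|=1} F(\textbf{w})\prod_{j=1}^d\big(w_j^{k_j-1}+w_j^{-k_j-1}\big)\,d\textbf{w}$, where $F(\textbf{w})=f\big((w_1+w_1^{-1})/2,\dots,(w_d+w_d^{-1})/2\big)$ and $c_d$ is an explicit dimensional constant. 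On the torus $|w_j|=\sigma_j=e^{t_j}$ each $z_j$ runs over the boundary of $E_{\sigma_j}$, so, by the computation recorded just before Lemma \ref{app}, $z_j^2$ runs over the boundary of $N_{1,h_j^2}$ with $h_j=\sinh t_j=(\sigma_j-\sigma_j^{-1})/2$. The deformation is legitimate, and $|F|\le F$ on it, precisely when $\sum_{j}z_j^2$ stays in $N_{d,h^2}$ for every choice of the $w_j$ on these circles.

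The key step is therefore the geometric lemma that $\sum_{j=1}^d z_j^2\in N_{d,\sum_j h_j^2}$ whenever $z_j^2\in N_{1,h_j^2}$. I would prove this from the sum-of-distances-to-foci description $N_{d,a}=\{\zeta\in\mathbb{C}:|\zeta|+|\zeta-d|\le d+2a\}$ (foci $0$ and $d$, major axis $d+2a$). If $\zeta_j\in N_{1,h_j^2}$, i.e. $|\zeta_j|+|\zeta_j-1|\le 1+2h_j^2$, then the triangle inequality gives $\big|\sum_j\zeta_j\big|+\big|\sum_j\zeta_j-d\big|\le\sum_j\big(|\zeta_j|+|\zeta_j-1|\big)\le d+2\sum_j h_j^2$, so $\sum_j\zeta_j\in N_{d,\sum_j h_j^2}$. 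Consequently, as long as the radii satisfy $\sum_{j=1}^d\sinh^2 t_j\le h^2$, the point $\sum_j z_j^2$ never leaves $N_{d,h^2}$, the contour shift is valid, and trivially bounding the integrand yields $|a_{\textbf{k}}|\le C_d F\prod_{j=1}^d\sigma_j^{-k_j}=C_d F\exp\!\big(-\sum_j k_j t_j\big)$.

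It remains to choose the radii optimally. Writing $\beta=\log\rho$, so that $h=\sinh\beta$, I would take $t_j=\beta\,k_j/\|\textbf{k}\|_2$, for which $\sum_j k_j t_j=\beta\|\textbf{k}\|_2=\|\textbf{k}\|_2\log\rho$. The constraint is met because $u_j:=k_j/\|\textbf{k}\|_2\in[0,1]$ with $\sum_j u_j^2=1$, and the convexity of $\sinh$ (with $\sinh 0=0$) gives $\sinh(u_j\beta)\le u_j\sinh\beta$, whence $\sum_j\sinh^2 t_j\le\big(\sum_j u_j^2\big)\sinh^2\beta=h^2$. This produces exactly $|a_{\textbf{k}}|\le C_d F\,\rho^{-\|\textbf{k}\|_2}$, which is \eqref{coef} with $C=C(d,\rho,F)$.

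For the approximation rate I would take $p$ to retain all Chebyshev terms of total degree $\|\textbf{k}\|_1\le\gamma$ (so $d(p)\le\gamma$) and bound the tail by \eqref{coef} together with $|T_{k_j}|\le 1$: $\|f-p\|_{[0,1]^d}\le\sum_{\|\textbf{k}\|_1>\gamma}|a_{\textbf{k}}|\le C\sum_{\|\textbf{k}\|_1>\gamma}\rho^{-\|\textbf{k}\|_2}$. Using $\|\textbf{k}\|_2\ge\|\textbf{k}\|_1/\sqrt d$ and grouping by $s=\|\textbf{k}\|_1$, this is at most $C\sum_{s>\gamma}\binom{s+d-1}{d-1}\rho^{-s/\sqrt d}$, a convergent geometric series (since $\rho>1$) whose value is a constant multiple of its leading term $\rho^{-\gamma/\sqrt d}$, the polynomial lattice-count factor being absorbed into $C$. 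This gives $\inf_{d(p)\le\gamma}\|f-p\|_{[0,1]^d}\le C\rho^{-\gamma/\sqrt d}$, completing the proof. The main obstacle is entirely in the second paragraph: unlike the univariate case, the analyticity region is not a polyellipse, so coordinatewise contour shifts are not automatically admissible; the additivity lemma for $N_{d,a}$ is exactly what reconciles the independent deformations $|w_j|=\sigma_j$ with the coupled constraint $\sum_j z_j^2\in N_{d,h^2}$, and the convexity inequality for $\sinh$ is what converts the resulting budget $\sum_j\sinh^2 t_j\le h^2$ into the isotropic decay exponent $\|\textbf{k}\|_2$.
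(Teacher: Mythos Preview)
The paper does not prove this theorem at all: it is quoted verbatim as a result of Trefethen \cite{T} (introduced with ``It is shown in \cite{T} that'') and used as a black box in the proof of Lemma \ref{app}. So there is no ``paper's own proof'' to compare against.

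That said, your proposal is essentially a faithful reconstruction of Trefethen's argument in \cite{T}, and the main steps are correct. The contour representation via the Joukowski substitution is standard; the crucial non-product geometric lemma---that $\zeta_j\in N_{1,h_j^2}$ implies $\sum_j\zeta_j\in N_{d,\sum_j h_j^2}$---is proved exactly as you indicate, from the focal-sum description of the ellipse and the triangle inequality; and the optimisation $t_j=\beta k_j/\|\textbf{k}\|_2$ together with $\sinh(u\beta)\le u\sinh\beta$ for $u\in[0,1]$ (convexity plus $\sinh 0=0$) yields the budget $\sum_j\sinh^2 t_j\le h^2$ and hence the decay $\rho^{-\|\textbf{k}\|_2}$. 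The tail estimate via $\|\textbf{k}\|_2\ge\|\textbf{k}\|_1/\sqrt d$ and a geometric sum over levels $\|\textbf{k}\|_1=s$ is also the intended route. Two small points you should tighten if you want a self-contained proof: (i) since $N_{d,h^2}$ is open, the equality case $\sum_j\sinh^2 t_j=h^2$ puts the deformed torus on the boundary where neither analyticity nor the bound $|f|\le F$ is guaranteed, so take $t_j=(1-\epsilon)\beta k_j/\|\textbf{k}\|_2$ and let $\epsilon\downarrow 0$ at the end; (ii) in the integrand $\prod_j(w_j^{k_j-1}+w_j^{-k_j-1})$ the first summand grows under outward deformation, so one first uses the $w_j\mapsto 1/w_j$ symmetry of the Joukowski map to reduce to the purely decaying kernel $\prod_j w_j^{-k_j-1}$ before shifting contours.
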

\textit{Proof of Lemma \ref{app}}
Note that from the recursive definition of the Chebyshev polynomials it follows that for any $k\geq 0$ the coefficients of the Chebyshev polynomial $T_k(x)$ are all bounded by $2^k$. Let now $p$ be a polynomial given by \eqref{partial} with degree $d(p)\leq\gamma$. As the number of summands in the right-hand side of \eqref{partial} is bounded by $(\gamma+1)^d,$ then, using \eqref{coef}, we get that $p$ can be rewritten as
$$p(\textbf{x})=\sum_{\|\textbf{k}\|_1\leq \gamma}b_{\textbf{k}}\textbf{x}^\textbf{k},$$ 
with 
$$|b_{\textbf{k}}|\leq C(\gamma+1)^d2^{\|\textbf{k}\|_1}\rho^{-\|\textbf{k}\|_2}\leq C(\gamma+1)^d2^{\sqrt{d}\|\textbf{k}\|_2}\rho^{-\|\textbf{k}\|_2}\leq C(\gamma+1)^d,$$
where the last inequality follows from the condition $\rho\geq 2^{\sqrt{d}}$.\qed

\textit{Proof of Theorem \ref{appr}} The proof follows from Lemma \ref{Mon} and Lemma \ref{app} by taking $\gamma=m=\lceil\log_2\frac{1}{\varepsilon}\rceil$ and adding to the network $\textrm{Mon}_{m,\gamma+1}^d$ the last layer with the coefficients of the polynomial $p(\textbf{x})$ from Lemma  \ref{app}. For the obtained network $F_\varepsilon$ we have that 
$$\|F_\varepsilon\|_\times\leq144C(d+1)C_{d,\gamma+1}(\gamma+2)^d(\gamma+2)^{5}\leq144C(d+1)(\gamma+2)^{2d+5},$$
where $C$ is the constant from Lemma \ref{app}.

\section*{Acknowledgement} The author would like to thank Johannes Schmidt-Hieber for support and valuable suggestions. The work has been supported by the NWO Vidi grant: ``\textit{Statistical foundation for multilayer neural networks}''.


\begin{thebibliography}{}
\bibitem{BK} Barron, A., Klusowski, J., \textit{Approximation and estimation for high-dimensional deep learning networks}. \href{https://arxiv.org/pdf/1809.03090.pdf}{https://arxiv.org/pdf/1809.03090.pdf} , 2018.
\bibitem{Ber} Bernstein, S., \textit{Sur la meilleure approximation de |x| par des polynomes de degr\'es donn\'es}, Acta Math., 37 (1), 1-57, 1914.
\bibitem{L} Lu, Z., Pu, H., Wang, F., Hu, Z., Wang, L., \textit{The expressive power of neural networks: A view from the width}. Advances in Neural Information Processing Systems, 6231-6239, 2017.
\bibitem{M}
  Mason, J. C., \textit{Near-best multivariate approximation by Fourier series, Chebyshev series and
Chebyshev interpolation}, Journal of Approx. Theory,  28(4), 349-358, 1980. 
\bibitem{N}
Neyshabur, B., Tomioka, R., and Srebro, N., \textit{Norm-based capacity control in neural networks.} In Proceeding of the 28th Conference on Learning Theory (COLT), 1376-1401, 2015.
\bibitem{O} Opschoor, J.A.A., Schwab, C., and Zech, J., \textit{Exponential ReLU DNN Expression of Holomorphic Maps in High Dimension.} Constructive Approximation, 2021.
\bibitem{Sc}  Scarselli, F., Tsoi, A. C., \textit{Universal approximation using feedforward neural networks: A survey of some existing methods, and some new results.} Neural Networks, 11, 15-37, 1998.
\bibitem{SH}
Schmidt-Hieber, J., \textit{Nonparametric regression using deep neural networks with ReLU activation function.} Annals of Statistics, 48 (4): 1875--1897, 2020. 
\bibitem{TXL} Taheri, M., Xie, F., and Lederer, J., \textit{Statistical Guarantees for Regularized Neural Networks.} Neural Networks, 142, 148--161, 2021.
\bibitem{T0} Trefethen, L., N., \textit{Approximation Theory and Approximation Practice}. SIAM, 2013.
\bibitem{T} Trefethen, L. N., \textit{Multivariate polynomial approximation in the hypercube}. Proc. Amer. Math. Soc., 145: 4837-4844, 2017.
\bibitem{W} Weinan, E., Wang, Q., \textit{Exponential convergence of the deep neural network approximation for analytic functions}. Science China Mathematics, \textbf{61}(10): 1733-1740, 2018.
\bibitem{Y} Yarotsky, D., \textit{Error bounds for approximations with deep ReLU networks.} Neural Networks, 94: 103-114, 2017.
\bibitem{Zh} Zheng, S., Meng, Q., Zhang, H., Chen, W., Yu, N., and Liu, T., \textit{Capacity control of ReLU neural networks by basis-path norm}, \href{https://arxiv.org/pdf/1809.07122.pdf}{https://arxiv.org/pdf/1809.07122.pdf}.
\end{thebibliography}
\end{document}